\newcommand{\cmark}{\ding{51}}%
\newcommand{\xmark}{\ding{55}}%
\newcommand{\LSE}{\operatorname{L\Sigma E}}
\newcommand{\Particle}{\expandafter\MakeUppercase{\particle}}
\newcommand{\population}{\expandafter\mathcal\expandafter\Particle}
\newcommand{\populationtensor}{\expandafter\mathbf\expandafter\Particle}
\newcommand{\particlespace}{\expandafter\mathscr\expandafter\Particle}
\newcommand{\Combination}{\expandafter\MakeUppercase{\combination}}
\newcommand{\combinationtensor}{\expandafter\mathbf\expandafter\Combination}
\newcommand{\Embedding}{\expandafter\MakeUppercase{\embedding}}
\newcommand{\embeddingtensor}{\expandafter\mathbf\expandafter\Embedding}
\title{On Deep Set Learning and the Choice of Aggregations}
\author{
	Maximilian Soelch\and
	Adnan Akhundov\and
	Patrick van der Smagt\and
	Justin Bayer
}
\authorrunning{M.\ Soelch et al.}
\institute{
	argmax.ai, Volkswagen Group Machine Learning Research Lab, Munich, Germany
	\email{m.soelch@argmax.ai}
}
\begin{document}
\maketitle
\begin{abstract}
	Recently, it has been shown that many functions on sets can be represented by sum decompositions.
	These decompositons easily lend themselves to neural approximations, extending the applicability of neural nets to set-valued inputs---Deep Set learning.
	This work investigates a core component of Deep Set architecture: aggregation functions. 
	We suggest and examine alternatives to commonly used aggregation functions, including learnable recurrent aggregation functions.
	Empirically, we show that the Deep Set networks are highly sensitive to the choice of aggregation functions:
	beyond improved performance, we find that  learnable aggregations lower hyper-parameter sensitivity and generalize better to out-of-distribution input size.
	\keywords{Set Functions  \and Deep Learning \and Representation Learning.}
\end{abstract}	

\section{Introduction}
Machine learning algorithms make implicit assumptions on the data set encoding.
For instance, feed-forward neural networks assume that data is encoded in a unique vector representation, \eg by one-hot encoding categorical variables.
Yet, many interesting learning tasks revolve around data sets consisting of sets:
depth vision with 3D point clouds, probability distributions represented by finite samples, or operations on unstructured sets of tags \cite{poczos_distribution-free_2013,wang_dynamic_2018,reed_generative_2016}.

Naively, a population\footnote{
	Disambiguating terms like \emph{set} and \emph{sample}, we discuss \emph{data sets} of \emph{populations} of \emph{particles}.}
is embedded by ordering and concatenating particle vectors into a matrix.
While standard neural networks can learn to \emph{imitate} order-invariant behavior, e.g. by random input permutation at each gradient step, such architectures are no true set functions.
Further, they cannot easily handle varying population sizes.
This motivated research into order-invariant neural architectures \cite{vinyals_order_2015-1,guttenberg_permutation-equivariant_2016,edwards_towards_2016,ravanbakhsh_deep_2016}.
From this, the Deep Set framework  emerged, proving that many interesting invariant functions allow for a sum decomposition \cite{zaheer_deep_2017,qi_pointnet_2017,wagstaff_limitations_2019}.
It allows for straightforward application of neural networks that are order-invariant
by design, and can handle varying population sizes.

In this work, we study aggregations---the component of a Deep Set architecture that induces order invariance by mapping a variable-sized population to a fixed-sized description.
After discussing desirable properties and extending the theory around aggregation functions, we suggest multiple alternatives, including learnable recurrent aggregation functions.
Studying them in several experimental settings, we find that the choice of aggregation impacts not only the performance, but also hyper-parameter sensitivity and robustness to varying population sizes.
In the light of these findings, we argue for new evaluation techniques for neural set functions. %
\section{Order-Invariant Deep Architectures}\label{sec:background}

We discuss populations $\population$ of particles $\bparticle$ from a particle space $\particlespace\subset\RR^d$, \ie $\bparticle \in \population$ and $\population \subset \particlespace \subset \RR^d$.
We are further interested in representations $\populationtensor\in \RR^{p\times d}, p = \abs{\population},$ achieved by concatenating the particles of $\population$.
A permutation of the particle axis with a permutation $\permutation$ is denoted by $\populationtensor_\permutation$, \ie $\populationtensor \neq \populationtensor_\permutation$ but $\populationtensor\equiv\population\equiv\populationtensor_\permutation$.
Data sets $\data$ consist of finite populations $\population_i$ of potentially varying size.
\subsection{Invariance, Equivariance, and Decomposition of Invariant Functions}
We study invariant functions according to
\begin{definition}[Invariance]\label{def:invariance} A function $f$ on the power set $\powerset{\particlespace}$ is \emph{order-invariant} if for any permutation $\pi$ and input $\set{\bparticle_1, \dots, \bparticle_N} \in \powerset{\particlespace}$
	\begin{align*}
		f\left(\set{\bparticle_1, \dots, \bparticle_N}\right) = f\left(\set{\bparticle_{\permutation(1)}, \dots, \bparticle_{\permutation(N)}}\right).
	\end{align*}
	
\end{definition}
If it is clear from the context, we will call such functions \emph{invariant}.
When the input is embedded as a matrix, \cref{def:invariance} can be formulated as
$
f(\populationtensor) = f(\populationtensor_\permutation).
$
A related, important notion is that of \emph{equivariant} functions:
\begin{definition}[Equivariance]
	A function $f$ is \emph{equivariant} if input permutation results in equivalent output permutation, \ie for any $\populationtensor$ and $\populationtensor_\permutation$
	\begin{align*}
		f(\populationtensor_\permutation) = (f(\populationtensor))_\permutation.
	\end{align*}	

\end{definition}
In \cite{zaheer_deep_2017}, a defining structural property of order-invariant functions was proven:
\begin{theorem}[Deep Sets, \cite{zaheer_deep_2017}]\label{thm:deepsets}
	A function $f$ on populations $\population$ from \emph{countable} particle space $\particlespace$
	is invariant if and only if there exists a decomposition,
\begin{align*}
	f(\population) = \process{\sum_{\bparticle\in\population}\embed{\bparticle}},
\end{align*}
with appropriate functions $\embed$ and $\process$.
\end{theorem}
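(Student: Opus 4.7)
My plan is to split the biconditional into its two implications. The sufficiency direction ($\Leftarrow$) is essentially free: for any choice of $\embed$ and $\process$, the map $\population \mapsto \process{\sum_{\bparticle \in \population}\embed{\bparticle}}$ is invariant, because finite summation is commutative and associative, so any permutation of the summands yields the same sum and hence the same $\process$-value. I would dispatch this in a single line.

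The substance lies in necessity ($\Rightarrow$), and my approach is to exploit countability of $\particlespace$ to make $\embed$ an injective encoding of finite populations into the real line. Concretely, fix an injection $c\colon \particlespace \to \mathbb{N}$, guaranteed by countability, and set $\embed{\bparticle} = 4^{-c(\bparticle)}$. Then for any finite $\population$, the aggregate $\Psi(\population) = \sum_{\bparticle \in \population}\embed{\bparticle}$ has a base-$4$ expansion whose $k$-th digit equals $1$ iff some $\bparticle \in \population$ satisfies $c(\bparticle) = k$, and $0$ otherwise. Distinct finite subsets therefore produce distinct digit patterns, so $\Psi$ is injective on the family of finite populations. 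I would then define $\process$ on $\mathrm{Im}(\Psi)$ by $\process{\Psi(\population)} := f(\population)$ and extend $\process$ arbitrarily elsewhere; the definition is unambiguous precisely because $\Psi$ is injective and $f$ is invariant, so it does not depend on any particular enumeration of $\population$. The claimed decomposition then holds by construction.

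The main obstacle I anticipate is verifying injectivity of $\Psi$ carefully, i.e.\ ruling out any ``carrying'' collisions in the base-$4$ expansion. For the set-valued case this is immediate because each digit is at most $1$; the headroom above base $2$ also lets one adapt the argument to bounded-multiplicity multisets, should the paper's notion of \emph{population} admit repetitions later on. Everything else is bookkeeping: countability supplies the encoding, and invariance of $f$ makes $\process$ well-defined on $\mathrm{Im}(\Psi)$. One caveat worth flagging at the end of the proof is that the resulting $\process$ is wildly discontinuous, which motivates the refinements and continuity-aware variants pursued in the sequel.
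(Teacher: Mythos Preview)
Your argument is correct and is essentially the standard construction from \cite{zaheer_deep_2017}: exploit countability to build an injective encoding of finite populations into $\RR$ via a non-colliding positional expansion, then pull $f$ back along it. However, the present paper does not supply its own proof of \cref{thm:deepsets}; the theorem is quoted from \cite{zaheer_deep_2017} and used as background, so there is no in-paper proof to compare against. Your closing remark about the wild discontinuity of $\process$ is apt and aligns with the paper's subsequent discussion of the pathologies identified in \cite{wagstaff_limitations_2019}.
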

We call such functions \emph{sum-decomposable}; this follows \cite{wagstaff_limitations_2019}, where severe pathologies for \emph{uncountable} input spaces are pointed out:
\begin{enumerate}
	\item There exist invariant functions that have no sum decomposition.
	\item There exist sum decompositions that are everywhere-discontinuous.
	\item Even relevant functions such as $\max(\population)$ cannot be \emph{continuously} decomposed when the image space of the embedding $\embed$ is smaller than the population size $|\population|$.
\end{enumerate}
As a consequence they refine \cref{thm:deepsets} to
\begin{theorem}[Uncountable Particle Spaces, \cite{wagstaff_limitations_2019}]\label{thm:continuous}
	A \emph{continuous} function $f$ on \emph{finite} populations $\population$, $|\population| \leq p$, is invariant if and only if it is sum-decomposable via $\RR^p$.
\end{theorem}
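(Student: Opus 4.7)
The direction ``sum-decomposable via $\RR^p$ $\Rightarrow$ invariant'' is immediate: commutativity of addition renders $\sum_{\bparticle \in \population}\embed{\bparticle}$ independent of the ordering of the particles, and continuity of $f$ is inherited from continuity of $\embed$ and $\process$.

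The substantive direction is the converse, and the plan is to exhibit \emph{one} embedding $\embed:\particlespace\to\RR^p$ through which every continuous invariant $f$ factors. For scalar $\particlespace\subseteq\RR$, the candidate is the \emph{power-sum} embedding
\begin{equation*}
    \embed{x} \;=\; \bigl(x,\,x^2,\,\ldots,\,x^p\bigr),
\end{equation*}
so that $\sum_{\bparticle\in\population}\embed{\bparticle}$ collects the first $p$ Newton power sums of the multiset $\population$. By the Newton--Girard identities these translate bijectively into the elementary symmetric polynomials of $\population$, which are precisely the coefficients of the monic polynomial $\prod_{\bparticle\in\population}(t-\bparticle)$. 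The aggregator is therefore injective on multisets of size at most $p$, with set-theoretic inverse ``read off the roots''.

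To upgrade this to a topological statement I would invoke the classical fact that the roots of a monic polynomial depend continuously on its coefficients when regarded as an unordered multiset (under the Hausdorff or matching metric). Granting this, $\process$ is defined on the image of the aggregator as $f$ composed with root-recovery, and then extended continuously to all of $\RR^p$ by Tietze's extension theorem. Populations with $|\population|<p$ are absorbed by convention, e.g.\ by padding with a repeated anchor value, so that the continuity argument treats them as limits of full-size configurations.

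The main technical obstacle is precisely this continuous-inverse step: root continuity is classical, but matching roots consistently across configurations where they coalesce forces one to work with multiset metrics rather than coordinatewise labels. A secondary subtlety concerns the passage from $\particlespace\subseteq\RR$ to $\RR^d$ with $d\geq 2$: no continuous injection $\RR^d\hookrightarrow\RR$ exists, so the scalar power-sum construction must be replaced by a basis of multivariate symmetric polynomials of bounded degree, which preserves the overall argument at the cost of more elaborate bookkeeping.
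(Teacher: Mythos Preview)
The paper itself does not prove this theorem; it is quoted from \cite{wagstaff_limitations_2019} as prior work, with no argument supplied here. Your sketch is essentially the argument of that reference for scalar $\particlespace\subset\RR$: the power-sum embedding $\embed{x}=(x,\ldots,x^p)$, the Newton--Girard passage to elementary symmetric polynomials, identification of the aggregate with the coefficient vector of $\prod_{\bparticle\in\population}(t-\bparticle)$, and recovery of $\process$ via continuous dependence of roots on coefficients are precisely the steps taken there. On the comparison question, then, there is nothing to contrast.

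The one place your proposal overreaches is the final paragraph. For $\particlespace\subset\RR^d$ with $d\ge 2$, swapping power sums for ``a basis of multivariate symmetric polynomials of bounded degree'' does preserve algebraic injectivity of the aggregate, but the continuous-inverse step no longer goes through: the aggregate is not the coefficient vector of a univariate polynomial, so there is no continuous-roots theorem to invoke, and constructing a continuous section of the quotient $(\RR^d)^p\to(\RR^d)^p/S_p$ landing in $\RR^p$ is exactly the hard part, not ``bookkeeping''. Wagstaff et al.\ confine their continuous-decomposition result to $\particlespace\subset\RR$; the $d\ge 2$ case with latent dimension only $p$ (rather than, say, $dp$) is not established there, and your sketch does not close that gap. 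A minor technical point: Tietze requires a closed domain, which the aggregator image need not be---either restrict to compact $\particlespace$ (so the image is compact, hence closed) or invoke a Dugundji-type extension from arbitrary subsets of a metric space.
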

That is, for arbitrary $f$, the image space of $\phi$ has to have at least dimension $p$, which is both necessary and sufficient.
More restrictive in scope than \cref{thm:deepsets}, it is more applicable in practice where most function approximators---neural networks, Gaussian processes---are continuous.

\subsection{Deep Sets}
\begin{figure}[tb]
	\centering
	\begin{subfigure}[b]{.57\textwidth}
		\centering
		\begin{tikzpicture}[scale=0.155]
		
		\draw [blue!80, fill=blue!10] (10, 15) rectangle ++(5, 11);
		\draw [blue!80, fill=blue!10] (9, 14) rectangle ++(5, 11);
		\draw [blue!80, fill=blue!10] (8, 13) rectangle ++(5, 11) node [black, pos=0.5] {X};
		
		\draw [->, >=latex, blue!50, semithick] (15, 20.5) -- (17, 20.5);
		
		\draw [blue!80] (17, 26) -- (21, 24) -- (21, 17) -- (17, 15) -- cycle;
		\draw (19, 20.5) node {$\embed$};
		
		\draw [->, >=latex, blue!50, semithick] (21, 20.5) -- (23, 20.5);
		
		\draw [blue!50, fill=orange!10] (22, 15.5) -- (25, 18.5) -- (30, 18.5) -- (27, 15.5) -- cycle;
		\draw [blue!80, fill=blue!10] (25, 18) rectangle ++(4, 8);
		\draw [blue!80, fill=blue!10] (24, 17) rectangle ++(4, 8);
		\draw [blue!80, fill=blue!10] (23, 16) rectangle ++(4, 8) node [black, pos=0.5] {M};
		
		\draw [->, >=latex, blue!50, semithick, rounded corners=1] (25, 15.5) -- (25, 14) -- (28, 14);
		
		\draw [blue!80] (29, 14) circle [radius=1] node [black] {$\oplus$};
		
		\draw [->, >=latex, blue!50, semithick] (29, 15) -- (27, 16.2);
		\draw [->, >=latex, blue!50, semithick] (29, 15) -- (28, 17.2);
		\draw [->, >=latex, blue!50, semithick] (29, 15) -- (29, 18.2);
		
		\draw [->, >=latex, blue!50, semithick] (29, 20.5) -- (31, 20.5);
		
		\draw [blue!80] (32, 20.5) circle [radius=1] node [black] {$\sigma$};
		
		\draw [->, >=latex, blue!50, semithick] (33, 20.5) -- (35, 20.5);
		
		\draw [blue!80] (36, 20.5) circle [radius=1] node [black] {$\oplus$};
		
		\draw [->, >=latex, blue!50, semithick] (37, 20.5) -- (39, 20.5);
		
		\draw [blue!80, fill=blue!10] (39, 17) rectangle ++(3, 7) node [black, pos=0.5] {$\aggregation$};
		
		\draw [->, >=latex, blue!50, semithick] (42, 20.5) -- (44, 20.5);
		
		\draw [blue!80] (44, 24) -- (47, 20.5) -- (44, 17) -- cycle;
		\draw (45, 20.5) node {$\process$};
		
		\draw [->, >=latex, blue!50, semithick] (47, 20.5) -- (49, 20.5);
		
		\draw [blue!80, fill=blue!10] (49, 19) rectangle ++(3, 3) node [black, pos=0.5] {\result};
		\end{tikzpicture}
		\caption{Deep Set Framework.}
		\label{fig:eap}
	\end{subfigure}
	\hfill\hfill
	\begin{subfigure}[b]{.35\textwidth}
		\centering
		\begin{tikzpicture}[scale=0.11]
		
		\draw [blue!80, fill=blue!10] (8, 11) rectangle ++(6, 12);
		\draw [blue!80, fill=blue!10] (7, 9.5) rectangle ++(6, 12);
		\draw [blue!80, fill=blue!10] (6, 8) rectangle ++(6, 12) node [black, pos=0.5] {M};
		
		\draw [blue!50, densely dashed] (17, 7) rectangle ++(26, 16);
		\draw [blue!50, densely dashed] (43, 20) rectangle ++(3, 3);
		\draw (44.5, 21.5) node [black] {$\oplus$};
		
		\draw [blue!80] (20, 20) circle [radius=1.5] node [black] {$\query_1$};
		\draw [blue!80] (27, 20) circle [radius=1.5] node [black] {$\query_2$};
		\draw [blue!80] (40, 20) circle [radius=1.5] node [black] {$\query_T$};
		
		\draw [blue!80] (20, 14.5) circle [radius=1.5] node [black] {$\aggregation_1$};
		\draw [blue!80] (27, 14.5) circle [radius=1.5] node [black] {$\aggregation_2$};
		\draw [blue!80] (40, 14.5) circle [radius=1.5] node [black] {$\aggregation_T$};
		
		\draw [blue!80, fill=blue!10] (18.75, 8.5) rectangle ++(2.5, 2.5);
		\draw [blue!80, fill=blue!10] (25.75, 8.5) rectangle ++(2.5, 2.5);
		\draw [blue!80, fill=blue!10] (38.75, 8.5) rectangle ++(2.5, 2.5);
		
		\draw [blue!80] (20, 4) circle [radius=1.5] node [black] {\aggregation};
		
		\draw [->, >=latex, blue!50, semithick] (14, 17.5) to [out=0] (18.9, 15.5);
		\draw [->, >=latex, blue!50, semithick] (14, 17.5) to (20, 17.5) to [out=0] (25.9, 15.5);
		\draw [->, >=latex, blue!50, semithick] (14, 17.5) to (33, 17.5) to [out=0] (38.9, 15.5);
		
		\draw [->, >=latex, blue!50, semithick] (21.5, 20) -- (25.5, 20);
		\draw [->, >=latex, blue!50, semithick] (28.5, 20) -- (38.5, 20) node[midway, fill=white, inner sep=2.0] {...};
		
		\path [fill=white] (20, 17.5) circle [radius=0.3];
		\draw [->, >=latex, blue!50, semithick] (20, 18.5) -- (20, 16);
		\path [fill=white] (27, 17.5) circle [radius=0.3];
		\draw [->, >=latex, blue!50, semithick] (27, 18.5) -- (27, 16);
		\draw [->, >=latex, blue!50, semithick] (40, 18.5) -- (40, 16);
		
		\path [fill=white] (23.3, 17) circle [radius=0.4];
		\path [fill=white] (23.8, 17.5) circle [radius=0.4];
		\draw [->, >=latex, blue!50, semithick] (21.1, 15.5) -- (25.9, 19);
		\path [fill=white] (30.8, 17.5) circle [radius=0.4];
		\draw [->, >=latex, blue!50, semithick] (28.1, 15.5) -- (32.9, 19);
		\path [fill=white] (36.3, 17.1) circle [radius=0.4];
		\draw [->, >=latex, blue!50, semithick] (34.1, 15.5) -- (38.9, 19);
		
		\path [blue!50, draw] (33.5, 14.5) node {...};
		
		\draw [->, >=latex, blue!50, semithick] (20, 13) -- (20, 11);
		\draw [->, >=latex, blue!50, semithick] (27, 13) -- (27, 11);
		\draw [->, >=latex, blue!50, semithick] (40, 13) -- (40, 11);
		
		\draw [->, >=latex, blue!50, semithick] (25.75, 9.75) -- (21.25, 9.75);
		\draw [->, >=latex, blue!50, semithick] (38.75, 9.75) -- (28.25, 9.75) node[midway, fill=white, inner sep=2.0] {...};
		
		\draw [->, >=latex, blue!50, semithick] (20, 8.5) -- (20, 5.5);
		\end{tikzpicture}
		\caption{Recurrent Aggregation.}
		\label{fig:reap}
	\end{subfigure}
	\caption{\emph{Left}: Deep Set architecture,
		\cref{eq:deep_sets_aggregate,eq:deep_sets_embed,eq:deep_sets_combine,eq:deep_sets_process},
		 with a single equivariant layer, \cref{eq:equivariant}.
		Aggregation functions are depicted by $\oplus$.
		\emph{Right}: Recurrent aggregation function, \cref{eq:recurrent_aggregation_1,eq:recurrent_aggregation_2,eq:recurrent_aggregation_3,eq:recurrent_aggregation_4,eq:recurrent_aggregation_5}.
		Queries to memory are produced in a forward pass, responses aggregated in a backward pass.
	}
\end{figure}
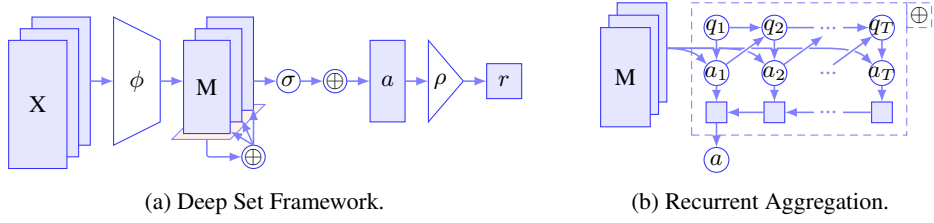
A generic invariant neural architecture emerges from \cref{thm:deepsets,thm:continuous} by using  neural networks for $\process$ and $\embed$, respectively.
In practice, to allow for higher-level particle interaction during the embedding $\embed$, \emph{equivariant} neural layers are introduced \cite{zaheer_deep_2017},
\begin{align}
	\operatorname{equivariant}(\populationtensor) = \sigma\mleft(\populationtensor - \mathbf{1}\aggregate{\populationtensor}\mright),\label{eq:equivariant}
\end{align}
where $\sigma(\cdot)$ denotes a per-particle feed-forward layer, and $\alpha(\cdot)$ denotes an aggregation.
Aggregations---our object of study---induce invariance by mapping a population to a fixed-size description, typically \eg sum, mean, or $\max$.
The full architecture is
\begin{align}
	\bembedding_i &= \operatorname{embed}(\bparticle_i)\label{eq:deep_sets_embed}\\
	\combinationtensor &= \operatorname{combine}(\embeddingtensor)&\mleft(\embeddingtensor = \mleft[\bembedding_i^\top\mright]\label{eq:deep_sets_combine}\mright)\\
	\baggregation &= \operatorname{aggregate}(\combinationtensor)&\label{eq:deep_sets_aggregate}\\
	\bresult &= \operatorname{process}(\baggregation)\label{eq:deep_sets_process},
\end{align}
with $\embed$ implemented by a per-particle embedding followed by an equivariant combination function consisting of equivariant layers.
Summation is replaced by a generic aggregation operation.
In \cite{zaheer_deep_2017,qi_pointnet_2017}, the $\max$ operation is suggested as an alternative summation.
Lastly, $\process$ can be implemented by arbitrary functions, since the aggregation in \cref{eq:deep_sets_aggregate} is already invariant.
This framework is depicted in \cref{fig:eap}.

\subsection{Order Matters}
\label{sub:order_matters}
Recurrent neural networks can handle set-valued input by feeding one particle at a time.
However, it has been shown that the result is sensitive to order, and an invariant \emph{read-process-write} architecture has been suggested as a remedy \cite{vinyals_order_2015-1}:
\eq{
	\bquery_t &= \operatorname{LSTM}(\bquery\tm,\baggregation\tm)\numberthis\label{eq:order_matters_1}\\
	\hat{\attention}_{i,t} &= \operatorname{attention}(\bembedding_i, \bquery_t)&\left(=\bembedding_i^\top\bquery_t\right)\numberthis\label{eq:order_matters_2}\\
	\battention_t &= \operatorname{softmax}(\hat{\battention}_t)\numberthis\label{eq:order_matters_3}\\
	\baggregation_t &= \sum \attention_{i,t}\bembedding_i\numberthis\label{eq:order_matters_4}\\
	\baggregation &= \baggregation_T\numberthis\label{eq:order_matters_5}
}
An embedded memory  is queried. 
The invariant result $\baggregation_t$ is iteratively used to refine subsequent queries with an LSTM \cite{hochreiter_long_1997}.
It is not obvious how to cast the recurrent structure into the setting of \cref{eq:deep_sets_aggregate,eq:deep_sets_combine,eq:deep_sets_process,eq:deep_sets_embed,thm:deepsets,thm:continuous}.
To the best of our knowledge, this model has only been discussed in its sequence-to-sequence context.
We will revisit and refine this architecture in \cref{sub:learnableaggr}.

\subsection{Further Related Work}
Several papers introduce and discuss a Deep Set framework for dealing with set-valued inputs \cite{qi_pointnet_2017,zaheer_deep_2017}.
A driving force behind research into order-invariant neural networks are point clouds \cite{qi_pointnet++_2017,qi_frustum_2017,qi_pointnet_2017}, where such architectures are used to perform classification and semantic segmentation of objects and scenes represented as point clouds in $\mathbb{R}^{3}$.
It is further shown that a $\max$ decomposition allows for arbitrarily close approximation \cite{qi_pointnet_2017}.

Generative models of sets have been investigated:
in an extension of variational auto-encoders \cite{kingma_auto-encoding_2013,rezende_stochastic_2014},  the inference of latent population statistics resembles a Deep Sets architecture \cite{edwards_towards_2016}.
Generative models of point clouds are proposed by \cite{achlioptas_learning_2018} and \cite{yi_gspn_2018}.

Permutation-invariant neural networks have been used for predicting dynamics of interacting objects \cite{guttenberg_permutation-equivariant_2016}. The authors propose to embed the individual object positions in pairs using a feed-forward neural network.
Similar pairwise approaches have been investigated by \cite{NIPS2014_5545,chang_compositional_2016}, and applied to relational reasoning in \cite{santoro_simple_2017}.

Weighted averages based on attention have been proposed and applied to multi-instance learning \cite{ilse_attention-based_2018}.
Several works have focused on higher-order particle interaction, suggesting computationally efficient approximations of Janossy pooling \cite{murphy_janossy_2018}, or propose set attention blocks as an alternative to equivariant layers \cite{lee_set_2018}. %
\section{The Choice of Aggregation}\label{sec:methods}

The invariance of the Deep Set architecture emerges from invariance of the \emph{aggregation function}---\cref{eq:deep_sets_aggregate}.
\Cref{thm:deepsets} theoretically justifies summing the embeddings $\embed{\bparticle_i}$.
In practice, mean or max-pooling operations are used. 
Equally simple and invariant, they are numerically favorable for varying population sizes, controlling input magnitude to downstream layers.
This section discusses alternatives and their properties.

\subsection{Alternative Aggregations}\label{sub:alternative_aggregations}
We start by justifying alternative choices with an extension of \cref{thm:deepsets,thm:continuous}:
\begin{corollary}[Sum Isomorphism]\label{cor:sum_isomorphism}
	\Cref{thm:deepsets,thm:continuous} can be extended to aggregations of the form $\aggregate_g = {g} \circ {\sum} \circ {g^{-1}}$, \ie summations in an isomorphic space.
\end{corollary}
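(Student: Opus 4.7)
The plan is to prove the corollary by absorbing the isomorphism $g$ into the outer components of the sum decomposition, thereby showing that the class of sum-decomposable functions and the class of $\alpha_g$-decomposable functions coincide. This reduces \cref{cor:sum_isomorphism} to a change-of-coordinates statement on top of \cref{thm:deepsets,thm:continuous}.

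First I would verify that $\alpha_g = g \circ \sum \circ g^{-1}$ is itself order-invariant, which is immediate: $g^{-1}$ acts particle-wise so it commutes with permutation, the interior $\sum$ is invariant, and $g$ is applied after aggregation. Next, assuming $f$ admits a sum decomposition $f(\population) = \rho\!\left(\sum_{\bparticle \in \population} \phi(\bparticle)\right)$, I set $\tilde{\phi} = g \circ \phi$ and $\tilde{\rho} = \rho \circ g^{-1}$. The $g^{-1}$ and $g$ inside $\alpha_g$ then cancel against their counterparts inside $\tilde{\phi}$ and $\tilde{\rho}$, yielding
\[
\tilde{\rho}\!\left(\alpha_g\!\left(\tilde{\phi}(\population)\right)\right) = \rho\!\left(\sum_{\bparticle \in \population} \phi(\bparticle)\right) = f(\population).
\]
Conversely, given an $\alpha_g$-decomposition $f = \rho \circ \alpha_g \circ \phi$, I would set $\phi'' = g^{-1} \circ \phi$ and $\rho'' = \rho \circ g$ to recover a plain sum decomposition, so that invariance follows from \cref{thm:deepsets} (or from \cref{thm:continuous} in the continuous case).

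For the continuous refinement, I would strengthen the hypothesis on $g$ from a bijection to a homeomorphism. Continuity of the four compositions $g \circ \phi$, $g^{-1} \circ \phi$, $\rho \circ g$, and $\rho \circ g^{-1}$ is then automatic, and the dimensionality lower bound of Wagstaff et al.\ transfers verbatim, since the latent sum still lives in a space homeomorphic to $\RR^p$. The main obstacle is essentially bookkeeping of domains: one must ensure that $g^{-1}$ is defined on $\phi(\particlespace)$ and that the intermediate sums stay in the domain of $g$. For a global homeomorphism $g: \RR^p \to \RR^p$ this is automatic, and no new approximation theory is required---the corollary becomes a reparameterization of the embedding space, in which the summation of \cref{thm:deepsets,thm:continuous} is performed in transformed coordinates.
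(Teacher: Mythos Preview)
Your proposal is correct and follows exactly the paper's approach: the paper's proof is the single identity $\rho \circ \sum \circ \phi = (\rho \circ g^{-1}) \circ g \circ \sum \circ g^{-1} \circ (g \circ \phi)$, from which it concludes that sum decompositions and $\alpha_g$-decompositions are interchangeable. Your additional remarks on invariance, the converse direction, and the homeomorphism requirement for the continuous refinement are sound elaborations, but the core argument is identical.
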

\begin{proof}
	From ${\process} \circ {\sum} \circ {\embed} = ({\process} \circ {g^{-1}}) \circ {g} \circ {\sum} \circ {g^{-1}} \circ ({g} \circ {\embed})$, sum decompositions can be constructed from $\aggregate_g$-decompositions and vice versa.
\end{proof}
This class includes, \eg, mean (with $g((\particle_1, \dots, \particle_{n+1})) = (\particle_1,\dots,\particle_n)/\particle_{n+1}$ and $g^{-1}(\bparticle) = (\bparticle^\top, 1)^\top$) and $\operatorname{logsumexp}$ ($\operatorname{L\Sigma E}$) (with $g= \ln$).
\begin{figure}[tb]
	{	\captionsetup{justification=raggedright,singlelinecheck=false}
		\centering
		\begin{subfigure}[b]{.15\textwidth}
			\centering
			\includegraphics[width=\linewidth]{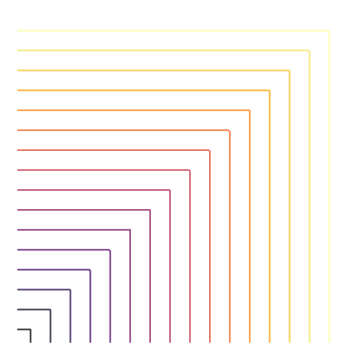}
			
			\caption{$\max$ on $[-10,10]^2$}
			\label{fig:max}
		\end{subfigure}
		\hfill
		\begin{subfigure}[b]{.15\textwidth}
			\centering
			\includegraphics[width=\linewidth]{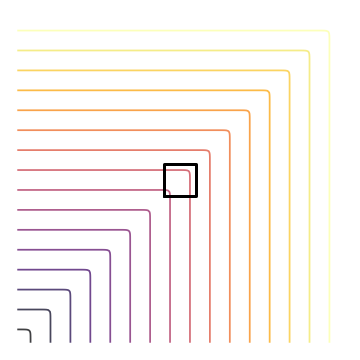}
			\caption{$\LSE$ on ${[-100,100]^2}$}
			\label{fig:lse1}
		\end{subfigure}
		\hfill
		\begin{subfigure}[b]{.15\textwidth}
			\centering
			\includegraphics[width=\linewidth]{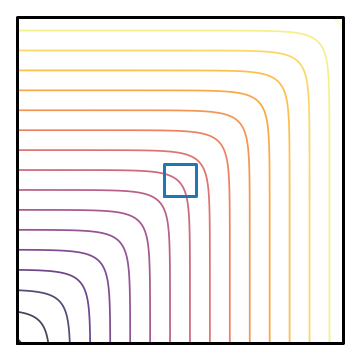}
			\caption{$\LSE$ on $ {[-10,10]^2}$}
			\label{fig:lse2}
		\end{subfigure}
		\hfill
		\begin{subfigure}[b]{.15\textwidth}
			\centering
			\includegraphics[width=\linewidth]{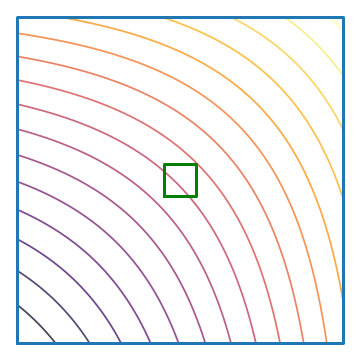}
			\caption{$\LSE$ on $ {[-1,1]^2}$}
			\label{fig:lse3}
		\end{subfigure}
		\hfill
		\begin{subfigure}[b]{.15\textwidth}
			\centering
			\includegraphics[width=\linewidth]{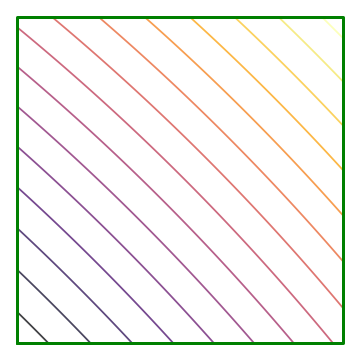}
			\caption{$\LSE$ on $ {[-.1,.1]^2}$}
			\label{fig:lse4}
		\end{subfigure}
		\hfill
		\begin{subfigure}[b]{.15\textwidth}
			\centering
			\includegraphics[width=\linewidth]{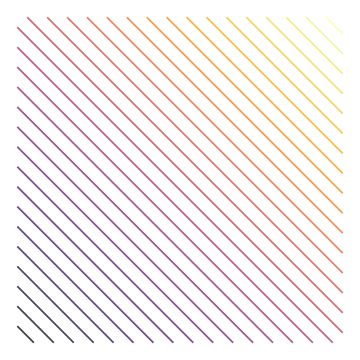}
			\caption{$\operatorname{sum}$ on $ {[-10,10]^2}$}
			\label{fig:sum}
	\end{subfigure}}
	\caption{
		Contour plots for $\max$ (left), sum (right), and $\operatorname{logsumexp}$ ($\LSE$) on two inputs.
		For large ranges, $\LSE$ acts like $\max$, shifting towards $\operatorname{sum}$ with decreasing input range.
		Matching square boxes indicate zoom between plots.
		Plots (a), (c), and (f) on range ${[-10,10]^2}$ share contour levels.
	}
	\label{fig:maxsumlogsumexp}
\end{figure} 
In that light, there is an interesting case to be made for $\LSE$:
depending on the input magnitudes, $\LSE$ can behave akin to $\max$ (cf.\ \cref{fig:max,fig:lse1,fig:lse2}) or like a linear function akin to summation (cf.\ \cref{fig:sum,fig:lse3,fig:lse4}).
Operating in log space, $\LSE$ further exhibits \emph{diminishing returns}:
$N$ identical scalar particles $\particle_i$ yield $\LSE(\set{\particle_i}) = \ln(N) + \particle_1$.
The larger $N$, the smaller the output change from additional particles.
Beyond making $\LSE$ a numerically useful aggregation, diminishing returns are a desirable property from a statistical perspective, where we would like to have asymptotically consistent results.

\paragraph{Divide and Conquer}\label{sub:d_and_c}
Commutative and associative binary operations like addition and multiplication yield invariant aggregations.
Widening this perspective, we see that divide-and-conquer style operations yield invariant aggregations:
order invariance is equivalent to conquering being invariant to division.
Examples beyond the previously mentioned operations are logical operators such as any or all, but also sorting (generalizing $\max$ and $\min$, and any percentile, \eg median).
While impractical for typical first-order optimization, we note that aggregations can be of very sophisticated nature.

\subsection{Learnable Aggregation Functions}
In \cite{zaheer_deep_2017}, cf.\ \cref{eq:deep_sets_embed,eq:deep_sets_combine,eq:deep_sets_aggregate,eq:deep_sets_process}, the aggregation is the only non-learnable component.
We will now investigate ways to render the aggregations learnable.
In \cref{sub:order_matters}, we have seen that due to the structure of \cref{thm:deepsets}, recurrent architectures as suggested by \cite{vinyals_order_2015-1} had been overlooked as it is not straightforward to cast them into the Deep Sets framework.
Inspired by the read-process-write architecture,
we suggest \emph{recurrent aggregations}:
\begin{definition}[Recurrent and Query Aggregation]\label{def:recurrent_aggregation}
	A \emph{recurrent aggregation} is a function $f(\population) = \baggregation$ that can be written recursively as:
	\eq{
		\bquery_t &= \operatorname{query}(\bquery\tm,\baggregation\tm)\numberthis\label{eq:recurrent_aggregation_1}\\
		\hat{\attention}_{i,t} &= \operatorname{attention}(\bembedding_i, \bquery_t)\numberthis\label{eq:recurrent_aggregation_2}\\
		\battention_t &= \operatorname{normalize}(\hat{\battention}_{t})\numberthis\label{eq:recurrent_aggregation_3}\\
		\baggregation_t &= \operatorname{reduce}\left(\left\{\attention_{i,t}\bembedding_i\numberthis\label{eq:recurrent_aggregation_4}\right\}\right)\\
		\baggregation &= g\mleft(\baggregation\Ts\mright),\numberthis\label{eq:recurrent_aggregation_5}
	}
	where $	\bembedding_i = \embed{\bparticle_i}$ is an embedding of the input population $\set{\bparticle_i}$ and $\bquery_1$ is a constant.
	We further call the special case $T=1$ (\ie a single query $\bquery \equiv \bquery_1$) a \emph{query aggregation}.
\end{definition}
As long as $\operatorname{reduce}$ is invariant and $\operatorname{normalize}$ is equivariant, recurrent and query aggregations are invariant.
This architectural block is depicted in \cref{fig:reap}.

Building upon \cref{eq:order_matters_1,eq:order_matters_2,eq:order_matters_3,eq:order_matters_4,eq:order_matters_5}, recurrent aggregations introduce two modifications:
firstly, we replace a weighted sum by a general weighted aggregation---giving us a rich combinatorial toolbox on the basis of simple invariant functions such as those mentioned in \cref{sub:d_and_c}.
Secondly, we add post-processing of the step-wise results $\baggregation\Ts$.
In practice, we use another recurrent network layer that processes $\baggregation\Ts$ in reversed order.
Without this modification, later queries tend to be more important, as their result is not as easily forgotten by the forward recurrence.
The backward processing reverses this effect, so that the first queries tend to be more important, and the overall architecture is more robust to common fallacies of recurrent architectures, in particular unstable gradients.

Observing \cref{eq:recurrent_aggregation_4}, we note that our learnable aggregation functions wrap around the previously discussed simpler non-learnable aggregations.
A major benefit is that the inputs are weighted---sum becomes weighted average, for instance.
This also allows the model to effectively exploit non-linearities as discussed with $\LSE$ (cf.\ \cref{fig:maxsumlogsumexp}).

\subsection{A Note on Universal Approximation}\label{sub:learnableaggr}
The key promise of \emph{universal} approximation is that a family of approximators (\eg neural nets, or neural sum decompositions) is dense within a wider family of interesting functions \cite{kolmogorov_representation_1957,hecht-nielsen_theory_1989,hornik_multilayer_1989}.
The universality granted by \cref{thm:deepsets,thm:continuous}, through constructive proofs, hinges on sum aggregation.
\Cref{cor:sum_isomorphism} grants flexibility, but does not apply to arbitrary aggregations, like $\max$ or the suggested learnable aggregations.
(Note that $\max$ allows for arbitrary approximation \cite{qi_pointnet_2017}.)
It remains open to what extent the sum can be replaced.
As such, the suggested architectures might not grant universal approximators.
As we will see in \cref{sec:experiments}, however, they provide useful inductive biases in practical settings, much like feed-forward neural nets are usually replaced with architectures targeted towards the task.
It is worth noting that the embedding dimension constraint of \cref{thm:continuous} is rarely met, trading theoretical guarantees for test-time performance.
\section{Experiments}\label{sec:experiments}
We consider three simple aggregations: mean (or weighted sum), $\max$, and $\LSE$.
These are used in equivariant layers and final aggregations, and may be be wrapped into a recurrent aggregation.
This combinatorially large space of configurations is tested in four experiments described in the following sections.

\subsection{Mininmal Enclosing Circle}
\begin{figure}
	\renewcommand{\arraystretch}{.75}
	\centering
	\hfill
	\begin{minipage}{0.3\textwidth}
		\centering
		\includegraphics[width=.8\columnwidth]{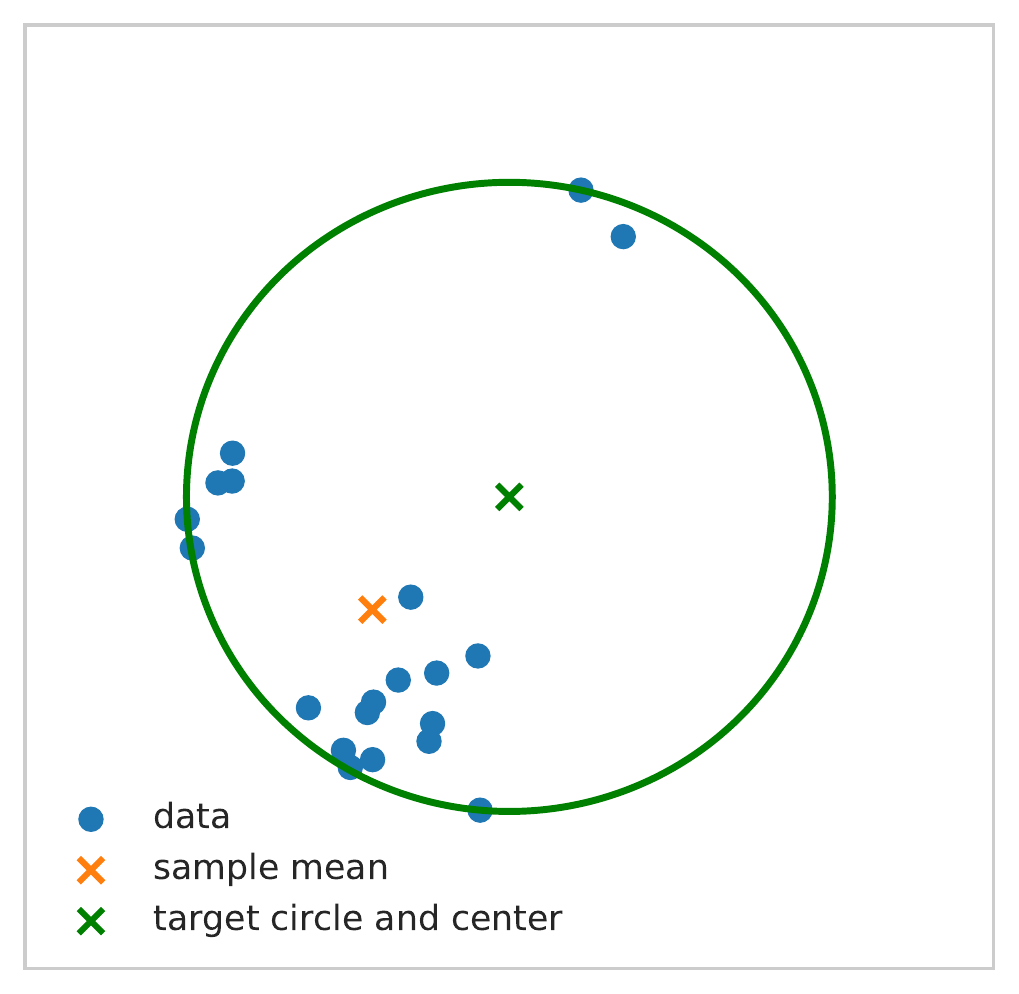}
		\caption{Minimal enclosing example population}\label{fig:mincircle}
	\end{minipage}
\hfill
	\begin{minipage}{0.68\textwidth}
		\centering
		\captionsetup{type=table} %
		\caption{Minimal enclosing circle results.}
		\label{tab:circle}
		\begin{tabular}{p{2cm}*{4}{p{1.cm}}}
			\toprule
			{recurrent \newline equiv./aggr.}& best MSE & radius MSE & center MSE& median best MSE\\\midrule
			\xmark\ / \xmark & 0.71 & 0.06 & 0.66 & 1.57\\
			\xmark\ / \cmark & 1.02 & 0.14 & 0.88 & 1.30\\
			\cmark\ / \xmark & 0.54 & 0.08 & 0.47 &0.87\\
			\cmark\ / \cmark & 0.42 & 0.09 & 0.33 & 0.58\\
			\bottomrule
		\end{tabular}
	\end{minipage}
\end{figure}

In this supervised experiment, we are trying to predict the  minimal enclosing circle of a population of size $20$ from a Gaussian mixture model (GMM).
A sample population with target circle is depicted in \cref{fig:mincircle}.
The sample mean does not approximate the center of the minimal enclosing circle well, and the correct solution is defined by at least three particles.
The models are trained by minimizing the mean squared error (MSE) towards the center and radius of the true circle (computable in linear time \cite{welzl_smallest_1991}).

Results are given in \cref{tab:circle}.
Each row shows the best result out of 180 runs (20 runs for each of the 9 combinations of aggregations).
We can see that both recurrent equivariant layers and recurrent aggregations improve the performance, with equivariant layers granting the larger performance boost.
The challenge lies mostly in a better approximation of the center.

The top row indicates that an entirely non-recurrent model performs better than its counterpart with recurrent aggregation (second row).
To test for a performance outlier, we compute a bootstrap estimate of the expected peak performance when only performing 20 experiments:
we subsample all available experiments (with replacement) into several sets of 20 experiments, recording the best performance in each batch.
The last column in \cref{tab:circle} reports the median of these best batch performances.
The result shows increased robustness to hyper-parameters, despite having more hyper-parameters.

\subsection{GMM Mixture Weights} \label{sub:mixture_weights}
\begin{figure}[tb]
	\centering
	\begin{subfigure}[b]{\textwidth}
		\centering
		\includegraphics[width=.9\textwidth]{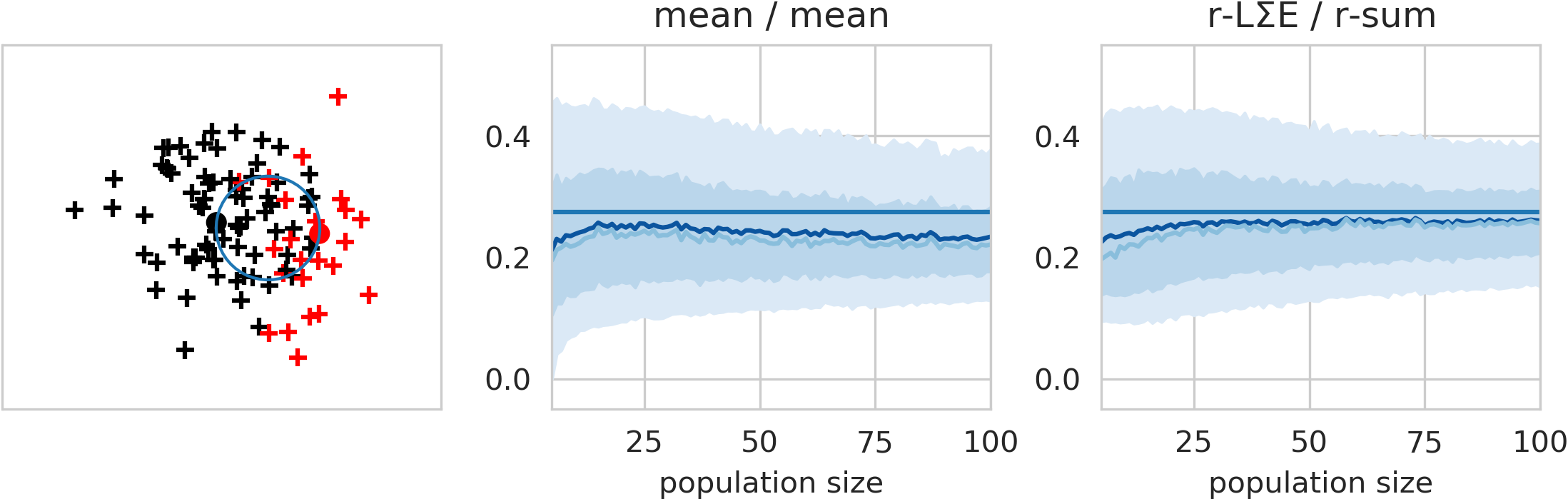}
		\caption{
			\emph{Left}: Example population.
			\emph{Middle and Right}: Estimator development for increasing populations size for a non-learnable and a learnable model, with 50\% and 90\% empirical confidence intervals.
		}
		\label{fig:mw}
	\end{subfigure}\newline
	\begin{subfigure}[b]{\textwidth}
		\centering
		\includegraphics[width=.9\textwidth]{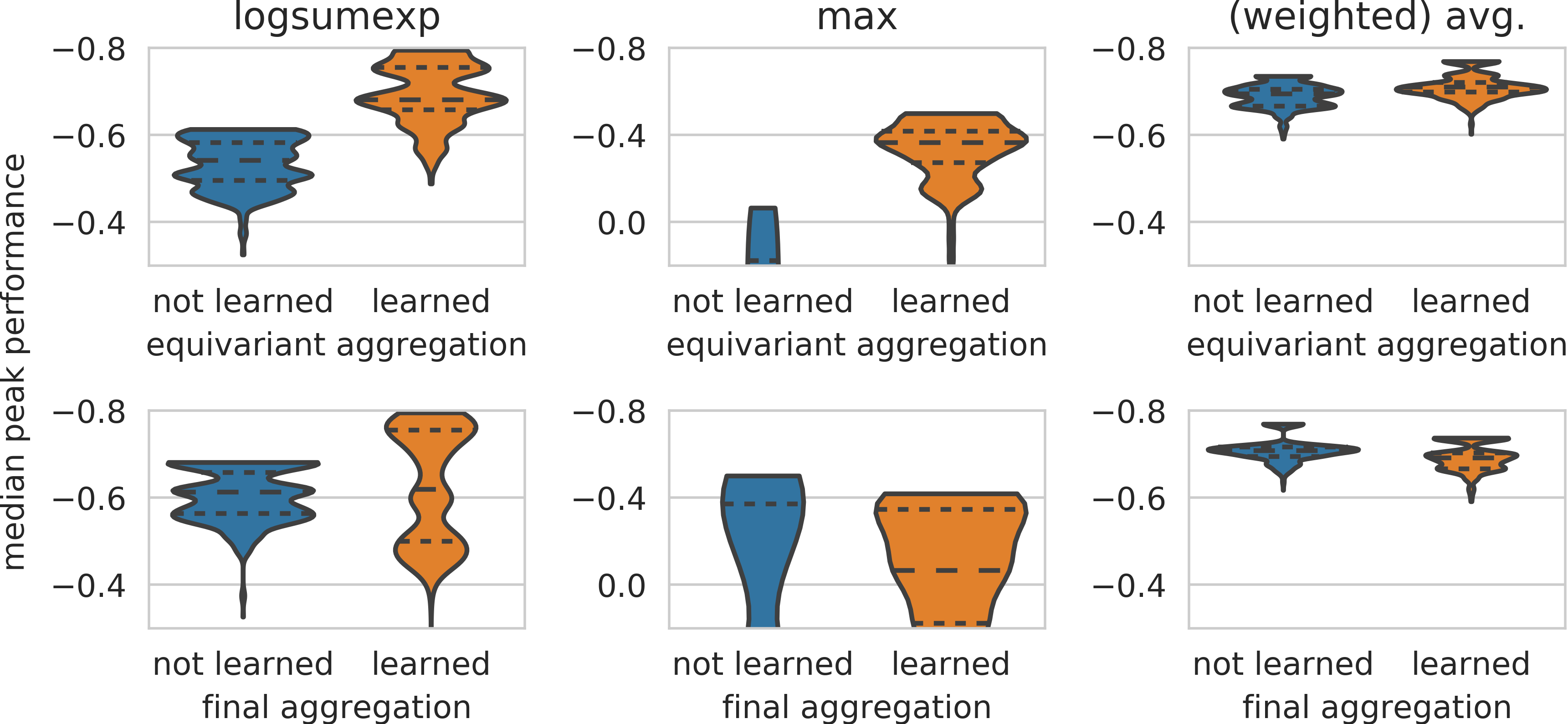}
		\caption{Robustness analysis. Metric is the score ratio of the true mixture weight under a neural model compared to expectation maximization (negative sign indicates EM is outperformed; the more negative, the better). Each violin shows the peak performance distribution for batches of 5 experiments. Top row: equivariant layer aggregations. Bottom row: final aggregations.
		}
		\label{fig:mwviolin}
	\end{subfigure}
	\caption{Results for the Gaussian mixture model mixture weights experiment.}
	\label{fig:mixture_weights}
\end{figure}
In this experiment, our goal is to estimate the mixture weights of a Gaussian mixture model directly from  particles.
The GMM populations of size $100$ in our data set are sampled as follows:
each mixture consists of two components;
the mixture weights are sampled from $[.05, .95]$;
the means span a diameter of the unit circle, their position is drawn uniformly at random;
component variances are a fixed to the same diagonal value such that the clusters are not linearly separable.
An example population is shown in \cref{fig:mw}.
The model outputs concentrations $a$ and $b$ of a Beta distribution. 
We train to maximize the log-likelihood of the smaller ground truth weight under this Beta distribution.
At training time, for every gradient step the batch population size $N$ is chosen randomly, with $\p{N=n}\propto n$.
In \cref{fig:mw}, we show how an estimator based on the learned model behaves with growing population size.

We were again interested in the robustness of the models.
We compare to expectation maximization (EM)---the classic estimation technique for mixture weights---as a baseline by gathering 100 estimates each from EM and the model for each population size by subsampling (with replacement) the original population.
Then we compare the likelihood of the true weight under a kernel density estimate (KDE) of these estimates.
The final metric is the log ratio of the scores under the two KDEs.
Then, as in the previous section, we compute the peak performance for batches of 5 experiments in order to see which configurations of models consistently perform well.

The results of this analysis are shown in \cref{fig:mwviolin}.
The top row indicates that learnable equivariant layers lead to a significant performance boost across all reduction operations.
Note that the y-axis is in log scale, indicating \emph{multiples} of improvements over the EM baseline.
We note that $\LSE$ benefits most drastically from learnable inputs.
Notably, the middle column, which depicts $\max$-type aggregations, indicates that this type of aggregation significantly falls behind the alternatives.
Notice that we had to scale the y-axes to even show the violins, and that a significant amount of \emph{peak} performances perform \emph{worse} than EM (indicated by sign flip of the metric).

\subsection{Point Clouds}\label{sub:point_clouds}
\begin{table}[tb]
	\caption{Test set accuracy on ModelNet40 classification.}
	\label{tab:accuracypc}
	\centering
	\begin{tabular}{*{11}{p{.075\textwidth}}}
		\toprule
		&\multicolumn{10}{c}{Equivariant layer type \& aggregation type}\\
		\vfill$\abs{\population}$&  $\max$\newline $\max$ &  $\max$\newline r-$\LSE$ &  $\max$\newline r-$\operatorname{sum}$ &  $\max$\newline q-$\max$ &  $\max$\newline q-$\operatorname{sum}$ &  r-$\operatorname{sum}$\newline r-$\operatorname{sum}$ &  $\max$\newline r-$\max$ &  r-$\max$\newline r-$\max$ &  r-$\LSE$\newline r-$\LSE$ &  q-sum\newline q-sum \\
		
		\midrule
		1000      &     87.3 &       85.8 &       85.7 &       83.8 &       83.5 &         82.0 &       81.7 &         81.2 &         78.0 &         77.5 \\
		100       &     66.5 &       75.3 &       73.0 &       69.5 &       68.4 &         71.9 &       45.3 &         22.0 &         64.0 &         60.3 \\
		50        &     47.0 &       62.8 &       58.4 &       52.4 &       51.3 &         61.0 &       35.5 &         14.6 &         51.9 &         46.8 \\
		\bottomrule
	\end{tabular}
\end{table}
The previous experiment extensively tested the effect of aggregations in controlled scenarios.
To test the effect of aggregations on a more realistic data set, we tackle classification of point clouds derived from the ModelNet40 benchmark data set \cite{zhirong_wu_3d_2015}.
The data set consists of CAD models describing the surfaces of objects from 40 classes.
We sample point cloud populations uniformly from the surface.
The training is performed on 1000 particles.
For this experiment, we fixed all hyper-parameters---including optimizer parameters and learning rate schedules---as described in \cite{zaheer_deep_2017}, and only exchanged the aggregation functions in the equivariant layers and the final aggregation.

The results for the 10 best configurations are summarized in \cref{tab:accuracypc}.
The original model ($\max$/$\max$ column) performs best in the training scenario ($\abs{\population}=1000$, first row)---as expected on hyper-parameters that were optimized for the model.
Otherwise, learnable final aggregations outperform all non-learnable aggregations.
We further observe that $\max$-type aggregations in equivariant layers seem crucial for good final performance.
This contrasts the findings from \cref{sub:mixture_weights}.
We believe this to be a result of either (i) the hyper-parameters being optimized for $\max$-type equivariant layers, or (ii) the classification task (as opposed to a regression task), favoring $\max$-normalized embeddings that amplify discriminative features.

The second and third row highlight an insufficiently investigated problem with invariant neural architectures:
the top-performing model overfits to the training population size.
Despite sharing all hyper-parameters except the aggregations, the test scenarios with fewer particles show that learnable aggregation functions generalize favorably.
Compare the first two columns:
both drops for the original model are comparable to the total drop for the learnable model.

\subsection{Spatial Attention}\label{sub:spatial_attention}
\begin{figure}[tb]
	\captionsetup[subfigure]{position=b}
	\centering\hspace{.15cm}
	\subcaptionbox{
		Spatial attention example.
		Each pane shows multiple test time bounding box samples for 5, 20, 200, 1000 particles. 
		\label{fig:sqair_data}}{
		\centering
		\begin{tabular}{cc}
			\includegraphics[width=.15\linewidth]{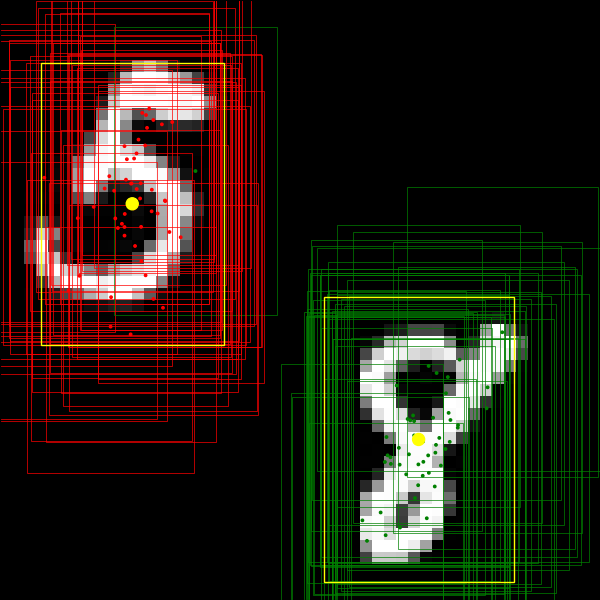}&
			\includegraphics[width=.15\linewidth]{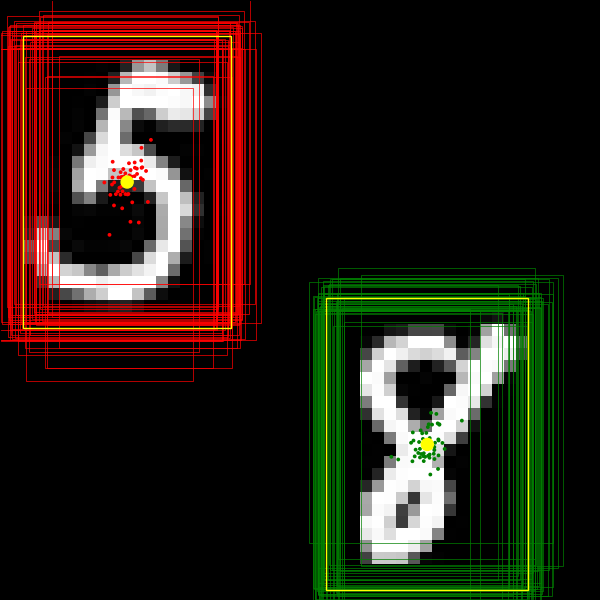}\\
			\includegraphics[width=.15\linewidth]{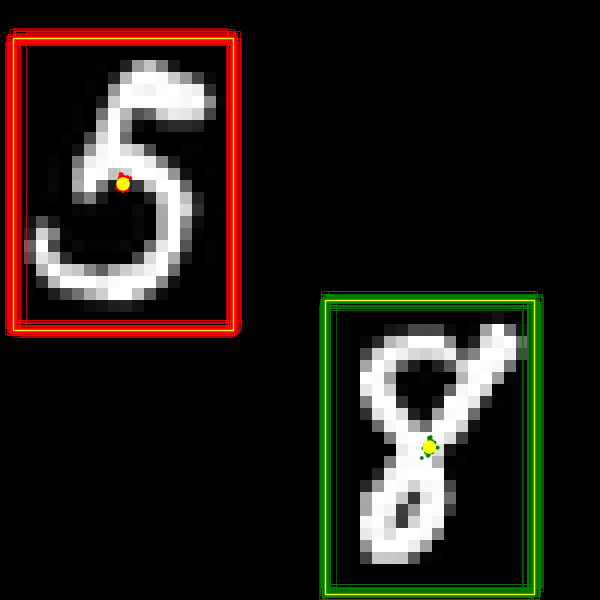}&
			\includegraphics[width=.15\linewidth]{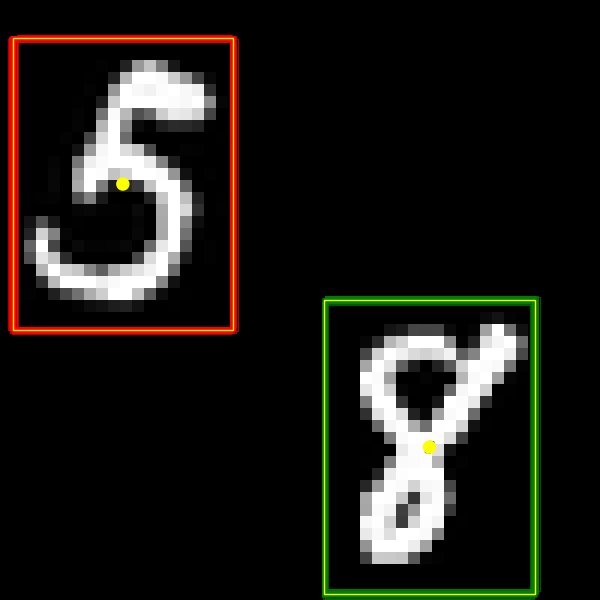}\\
		\end{tabular}
	}\hfill
	\subcaptionbox{
		Test-time evidence lower bound values against various population sizes.
		Dashed vertical line: training population size.
		Dashed horizontal line: best baseline model.
		 \label{fig:air_elbo}}{\centering	\includegraphics[width=.55\linewidth]{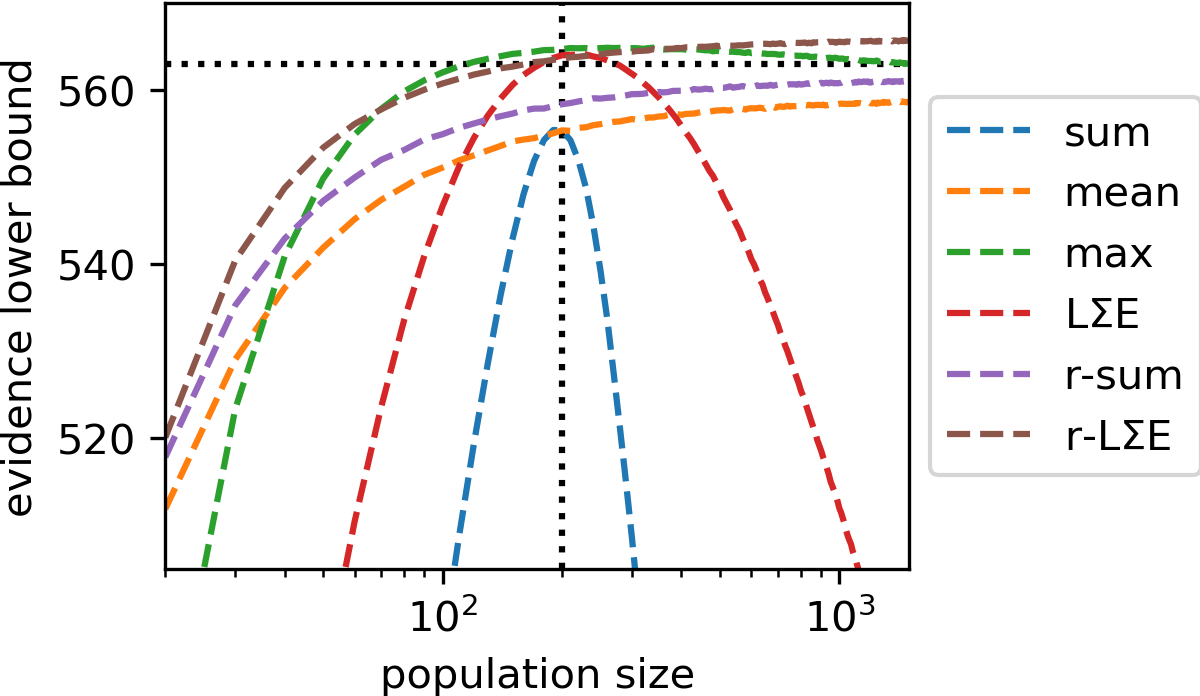}}\hspace{.25cm}
	\caption{Results of the spatial attention experiment.}
	\label{fig:spatialattention}
\end{figure}

In the previous experiments, we investigated models trained in isolation on supervised tasks.
Here, we will test the performance as a building block of a larger model, trained end-to-end and unsupervised.
The data consists of canvases containing multiple MNIST digits, cf.\ \cref{fig:sqair_data}.
In \cite{eslami_attend_2016-1}, an unsupervised algorithm for scene understanding of such canvases was introduced.
We plug an invariant model as the localization module, which repeatedly attends to the input image, at each step returning the bounding box of an object.
To turn a canvas into a population, we interpret the gray-scale image as a two-dimensional density and create populations by sampling 200 particles proportional to the pixel intensities.
Remarkably, the set-based approach requires an order of magnitude fewer weights, and consequently has a significantly lower memory footprint compared to the original model, which repeatedly processes the entire image.

The task is challenging in several ways:
the loss is a lower bound to the likelihood of the input canvas, devoid of localization information.
The intended localization behavior needs to emerge from interaction with downstream components of the overall model.
As with enclosing circles, the bounding box center is correlated with the sample mean of isolated particles from one digit.
However, depending on the digit, this can be inaccurate.

As \cref{fig:air_elbo} indicates, the order-invariant architecture on 200 particles (as in training, vertical line) can serve as a drop-in replacement, performing on a par or slightly improved compared to the original model baseline, indicated by the vertical line.
This is remarkable, with the original model being notoriously hard to train \cite{kosiorek_sequential_2018}.

We investigate the performance of the model when the population size varies.
We observe that the effect on performance varies with different aggregation functions.
Learnable aggregation functions exhibit strictly monotonic performance improvements.
This is reflected by tightening bounding boxes for increasing population sizes, \cref{fig:sqair_data}.
Similar behavior cannot be found reliably for non-learnable aggregations.
Note that we can trade off performance and inference speed \emph{at test time} by varying the population size.

Lastly, we note that in both this and the point cloud experiment, \cref{sub:point_clouds}, learnable $\LSE$-aggregations performed well.
We attribute this to the properties of diminishing returns and sum-$\max$-interpolation amplified by weighted inputs, cf.\ \cref{sec:experiments}. %
\section{Discussion and Conclusion}
We investigated aggregation functions for order-invariant neural architectures.
We discussed alternatives to previously used aggregations.
Introducing recurrent aggregations, we showed that each component of the Deep Set framework can be learnable.
Establishing the notion of sum isomorphism, we created ground for future aggregation models.

Our empirical studies showed that aggregation functions are indeed an orthogonal research axis within the Deep Set framework worth studying.
The right choice of aggregation function may depend on the type of task (\eg regression vs.\ classification).
It affects not only training performance, but also model sensitivity to hyper-parameters and test time performance on out-of-distribution population sizes.
We showed that the learnable aggregation functions introduced in this work are more robust in their performance and more consistent in their estimates with growing population sizes.
Lastly, we showed how to exploit these features in larger architectures by using neural set architectures as drop-in replacements.
In the light of our experimental results, we strongly encourage emphasizing desirable properties of invariant functions, and in particular actively challenge models in non-training scenarios in future research. %

\bibliographystyle{splncs04}
\bibliography{neural_statistics.bib}
\end{document}